\documentclass[10pt,twocolumn,letterpaper]{article}

\usepackage[utf8]{inputenc}
\usepackage[T1]{fontenc}
\usepackage{lmodern}
\usepackage[letterpaper,margin=0.75in]{geometry}

\usepackage{amsmath,amssymb,amsthm}
\usepackage{bm}
\usepackage{graphicx}
\usepackage{booktabs}
\usepackage{xcolor}
\usepackage{colortbl}
\usepackage{multirow}
\usepackage{array}
\usepackage{pifont}
\usepackage{tikz}
\usetikzlibrary{arrows.meta,positioning,calc,fit,backgrounds}
\usepackage{cite}
\usepackage[hidelinks]{hyperref}

\definecolor{HdrBlue}{RGB}{31,78,121}
\definecolor{OursTint}{RGB}{223,236,250}
\definecolor{RowAlt}{RGB}{245,248,251}
\definecolor{BlueGeo}{RGB}{27,102,196}
\definecolor{OrangeSem}{RGB}{226,135,28}
\definecolor{GoodGreen}{RGB}{24,138,86}
\definecolor{BadRed}{RGB}{197,57,50}

\newcommand{\B}[1]{\textbf{#1}}
\newcommand{\hd}[1]{\textcolor{white}{\textbf{#1}}}
\newcommand{\down}{$\,\downarrow$}
\newcommand{\up}{$\,\uparrow$}
\newcommand{\best}[1]{\textbf{#1}}
\newcommand{\Lorth}{\mathcal{L}_{\perp}}
\newcommand{\Jg}{\widehat{J}_g}
\newcommand{\Jt}{\widehat{J}_\tau}

\theoremstyle{plain}
\newtheorem{theorem}{Theorem}
\newtheorem{lemma}{Lemma}
\newtheorem{proposition}{Proposition}
\newtheorem{corollary}{Corollary}
\theoremstyle{definition}

\title{\vspace{-1.0em}\textbf{OrthoMotion: Disentangling Camera and Subject Motion\\
via Geometry--Semantics Orthogonal Attention}\vspace{-0.2em}}

\author{Zijie Meng\\[2pt]
Peking University, China\\[1pt]
{\small\texttt{ymlf@stu.pku.edu.cn}}}
\date{}

\begin{document}
\maketitle

\begin{abstract}
\emph{Controllable video generation demands independent command of the camera
and the subject, yet 2D conditioning entangles them: camera- and object-induced
optical flow share the same inverse-depth ($1/Z$) scaling and cannot be
separated from image evidence alone. We first prove that this entanglement is
\emph{representational}, not architectural---the 2D camera/object split is a
non-identifiable inverse problem---and therefore reframe decoupling as a
question of operator design. We resolve it at the level of the attention
operator. \textbf{OrthoMotion} routes camera motion into a \emph{geometric}
channel, a norm-preserving rotation of the rotary position embedding (RoPE)
phase, and subject motion into a \emph{semantic} channel, a gated value
injection in cross-attention. Because these sub-operators are algebraically
complementary---a rotation versus a translation of the affine action on
tokens---a lightweight decoupling regularizer provably drives their response
subspaces to orthogonality, so the two controls stop interfering. To our
knowledge OrthoMotion is the first method to \emph{guarantee} disentanglement
by construction rather than hope for it to emerge. It attains state-of-the-art
camera and subject accuracy at once while minimizing cross-talk, which we
quantify with a new Cross-Talk Error (CTE) metric, cutting cross-talk by
$>\!2.4\times$ with no loss in fidelity and generalizing across backbones.}
\end{abstract}

\begin{figure*}[t]
\centering
\includegraphics[width=0.9\textwidth]{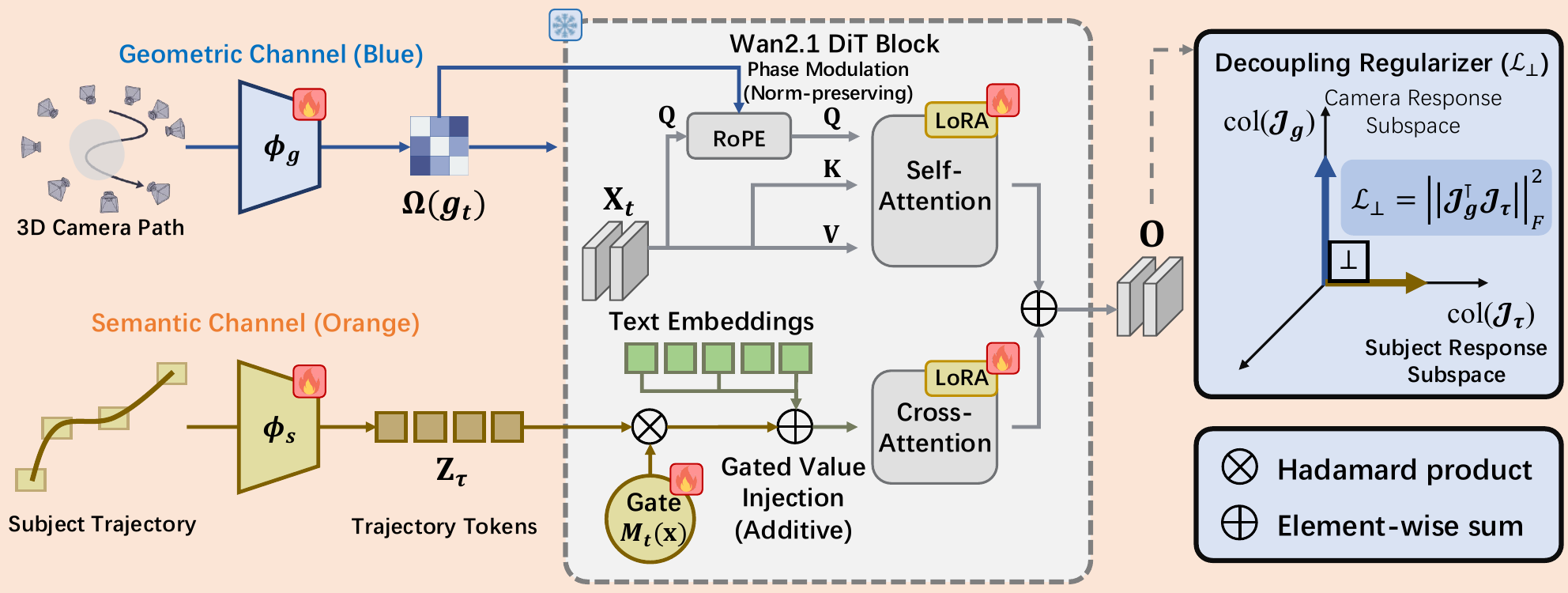}
\caption{\label{fig:overview}\textbf{Overview of OrthoMotion.} The
\textcolor{BlueGeo}{geometric channel} $\phi_g$ injects a norm-preserving phase
$\Omega(g_t)\in SO(d_h)$ into RoPE, while the \textcolor{OrangeSem}{semantic
channel} $\phi_s$ fuses subject-trajectory tokens $\mathbf{Z}_\tau$, gated by
$M_t(\mathbf{x})$, into cross-attention values. A regularizer
$\Lorth=\|\Jg^{\top}\Jt\|_F^2$ enforces orthogonal camera/subject response
subspaces.}
\end{figure*}

\begin{figure}[t]
\centering
\includegraphics[width=.88\columnwidth]{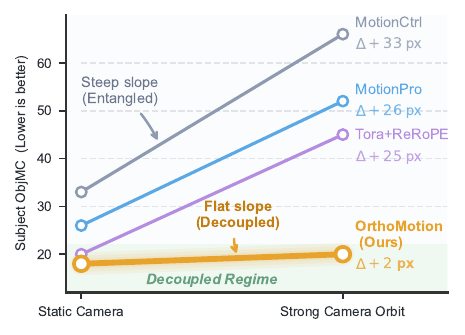}
\caption{\label{fig:quant}\textbf{Decoupling at a glance.} Subject error
(ObjMC) vs.\ camera-motion magnitude; OrthoMotion stays nearly flat
($\Delta\!\approx\!+2$\,px) while baselines entangle
($\Delta\!\ge\!+25$\,px).}
\vspace{-0.4cm}
\end{figure}

\section{Introduction}
Cinematic creation needs to steer \emph{where the camera goes} and \emph{where
the subject goes} as two separate dials\cite{wei2025robust, weirusid, liu2025synpo, meng2025orpaint, meng2026make, meng2026argus}. Recent controllers master each axis in
isolation---camera control via Pl\"ucker conditioning~\cite{He2025CameraCtrl} or
relative pose encodings~\cite{Li2026ReRoPE,li2026cameras}, and subject control
via trajectory injection~\cite{yin2023dragnuwa,Zhang2024Tora,meng2026trident}---and some target
both~\cite{Wang2024MotionCtrl,Zhang2025MotionPro}, yet the dials stay coupled:
an orbit drags the subject off its path, and moving the subject perturbs the
recovered camera. Decoupled camera--subject control is the open problem we
attack.

\paragraph{Why are they entangled?}
The confound is representational, not merely architectural. For a scene point at
depth $Z$ under camera linear velocity $\mathbf{T}$ and angular velocity
$\bm{\omega}$, with object self-velocity inducing $\mathbf{V}_{\!o}$, the
instantaneous image motion is
\begin{equation}\label{eq:flow}
\mathbf{u}(\mathbf{x})=
\underbrace{\tfrac{1}{Z}\!\begin{bmatrix}xT_z-fT_x\\[1pt] yT_z-fT_y\end{bmatrix}}_{\text{camera transl.}\,\propto\,1/Z}
\!+\,\underbrace{\mathbf{B}(\mathbf{x})\,\bm{\omega}}_{\text{rotation, depth-free}}
\!+\underbrace{\tfrac{1}{Z}\,\mathbf{V}_{\!o}(\mathbf{x})}_{\text{object}\,\propto\,1/Z},
\end{equation}
with focal length $f$ and the depth-independent rotational generator
\(
\mathbf{B}(\mathbf{x})=\big[\begin{smallmatrix}
xy/f & -(f+x^2/f) & y\\
f+y^2/f & -xy/f & -x\end{smallmatrix}\big].
\)
Camera-translational and object flow share the \emph{same} $1/Z$ factor, so from
the 2D field alone the split
$\mathbf{u}=\mathbf{u}_{\mathrm{cam}}+\mathbf{u}_{\mathrm{obj}}$ is fixed only up
to a depth-scaled gauge; a 2D-conditioned controller thus solves an ill-posed
inverse and inherits its ambiguity as entanglement (formalized in
Sec.~\ref{sec:prelim}).

\paragraph{Our idea.}
Equation~\eqref{eq:flow} prescribes the cure: lift the camera into a
depth-independent \emph{geometric} representation, so it can never be confused
with the per-pixel object term, and keep the subject \emph{semantic}, carrying
each in a distinct, algebraically complementary operator. Inside
Wan2.1~\cite{Wan2025} this is \emph{Geometry--Semantics Orthogonal} (GSO)
attention. While prior work hosts the camera in the RoPE
phase~\cite{Li2026ReRoPE,li2026cameras,meng2026parascalescalecalibratedcameramotiontransfer} or drives a DiT with
trajectories~\cite{Zhang2024Tora}, none routes \emph{both} into complementary
sub-operators and \emph{enforces} their orthogonality, turning decoupling from
an emergent hope into a design guarantee.

\paragraph{Contributions.}
\begin{itemize}\setlength{\itemsep}{1pt}
\item \textbf{A representational diagnosis.} We show camera--subject
entanglement stems from the shared $1/Z$ scaling in Eq.~\eqref{eq:flow} and
\emph{prove} the 2D camera/object split is non-identifiable, recasting
decoupling as an operator-design problem rather than a data or architecture one.
\item \textbf{GSO attention with a decoupling regularizer.} We propose a
norm-preserving geometric channel and a gated semantic channel whose response
Jacobians are driven to orthogonality by $\Lorth$; we prove this bounds
first-order cross-talk, the first method to \emph{guarantee} (not hope for)
disentanglement, and we introduce the Cross-Talk Error (CTE) protocol.
\item \textbf{Empirical validation.} OrthoMotion attains state-of-the-art camera
\emph{and} subject accuracy simultaneously, cutting cross-talk by $>\!2.4\times$
over the regularizer-free variant with no fidelity loss, and generalizing across
multiple pose-conditioned backbones.
\end{itemize}

\begin{table}[t]
\centering
\setlength{\tabcolsep}{4pt}\renewcommand{\arraystretch}{1.18}
\resizebox{\columnwidth}{!}{%
\begin{tabular}{lccccc}
\rowcolor{HdrBlue}
\hd{Method} & \hd{RotErr}\down & \hd{TransErr}\down & \hd{ObjMC}\down &
\hd{FVD}\down & \hd{CLIP}\up\\
\midrule
MotionCtrl~\cite{Wang2024MotionCtrl}      & 1.92 & 0.74 & 38.6 & 198 & .305\\
\rowcolor{RowAlt}
MotionPro~\cite{Zhang2025MotionPro}        & 1.54 & 0.61 & 31.2 & 176 & .312\\
Tora$\oplus$ReRoPE~\cite{Zhang2024Tora,Li2026ReRoPE}
                                           & 1.31 & 0.55 & 27.4 & 169 & .314\\
\rowcolor{OursTint}
\ding{72}~\B{OrthoMotion (Ours)}          & \best{1.02} & \best{0.43} & \best{19.8} & \best{142} & \best{.327}\\
\bottomrule
\end{tabular}}
\caption{\label{tab:main}\textbf{Joint camera\,+\,subject control} on the
Wan2.1-1.3B backbone (lower is better except CLIP-SIM). Best in \B{bold}.}
\end{table}

\section{Related Work}
\paragraph{Motion control in video generation.}
Camera and subject control have largely evolved on separate tracks\cite{He2025CameraCtrlII, wang2025cinemaster, hu2024motionmaster, you2024nvs, liu2026omnidirector, HartleyZisserman}. On the
camera side, CameraCtrl conditions a frozen diffusion model on per-pixel
Pl\"ucker maps via a trainable encoder~\cite{He2025CameraCtrl}, while a newer
line injects pose \emph{relatively} inside attention: ReRoPE repurposes the
redundant low-frequency RoPE bands for relative camera
control~\cite{Li2026ReRoPE}, and PRoPE encodes full camera frustums as a
relative positional encoding~\cite{li2026cameras}. On the subject side,
DragNUWA~\cite{yin2023dragnuwa} and Tora~\cite{Zhang2024Tora} drive generation
with object trajectories, the latter inside a Diffusion
Transformer~\cite{peebles2023scalable}. MotionCtrl~\cite{Wang2024MotionCtrl} and
MotionPro~\cite{Zhang2025MotionPro} target \emph{both} axes, but through
separate modules or shared trajectory conditions without any mechanism that
prevents the two controls from interfering. We differ on two counts: (i) we
identify the entanglement as a representational ambiguity in Eq.~\eqref{eq:flow}
and (ii) we co-design two \emph{complementary} attention sub-operators whose
orthogonality we explicitly enforce.

\paragraph{Positional encodings and the norm-preservation gap.}
RoPE encodes position by a norm-preserving rotation of query/key
features, so the logit depends only on relative
position~\cite{su2021roformer}. Camera-as-RoPE methods\cite{wang2023videocomposer, li2025magicmotion, geyer2023tokenflow, jiang2024dive} inherit the rotary
machinery but break its key invariant: ReRoPE's projective embedding is
explicitly \emph{non}-norm-preserving and ``requires careful
stabilization''~\cite{Li2026ReRoPE}, and PRoPE injects $4\!\times\!4$ projective
matrices~\cite{li2026cameras}. OrthoMotion instead keeps the camera strictly
inside $SO(d_h)$, recovering RoPE's norm preservation (Sec.~\ref{sec:geo}) and,
crucially, extending the design to \emph{simultaneously} host a semantic subject
channel. Our backbone is the flow-matching Wan2.1 diffusion
transformer~\cite{Wan2025,lipman2023flow,peebles2023scalable}.

\section{Preliminaries: The Geometry of Entanglement}\label{sec:prelim}
We make the entanglement claim precise. Decompose Eq.~\eqref{eq:flow} as
$\mathbf u=\mathbf u_T+\mathbf u_R+\mathbf u_O$ with
\begin{equation}\label{eq:decomp}
\mathbf u_T=\tfrac1Z\mathbf M(\mathbf x)\bm T,\quad
\mathbf u_R=\mathbf B(\mathbf x)\bm\omega,\quad
\mathbf u_O=\tfrac1Z\mathbf V_{\!o}(\mathbf x),
\end{equation}
where $\mathbf M(\mathbf x)=\big[\begin{smallmatrix}-f&0&x\\0&-f&y\end{smallmatrix}\big]$
maps the camera baseline to image displacement and
$\mathbf V_{\!o}(\mathbf x)=[\,fV_x-xV_z,\ fV_y-yV_z\,]^{\!\top}$ collects the
object's self-velocity (derivation: differentiate $\mathbf x=(fX/Z,fY/Z)$ along
the object trajectory).

\begin{lemma}[Shared inverse-depth scaling]\label{lem:scale}
$\mathbf u_T$ and $\mathbf u_O$ are each homogeneous of degree $-1$ in $Z$,
whereas $\mathbf u_R$ is independent of $Z$. Hence the camera-\emph{translational}
and object contributions are indistinguishable by their depth signature; only
rotation is depth-free.
\end{lemma}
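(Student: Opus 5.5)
The plan is to derive the three terms of Eq.~\eqref{eq:decomp} directly from the pinhole projection $\mathbf x=(fX/Z,\,fY/Z)$ and then read off their scaling in $Z$ with $\mathbf x$ held fixed. Holding $\mathbf x$ fixed is the essential convention: $\mathbf M(\mathbf x)$, $\mathbf B(\mathbf x)$ and $\mathbf V_{\!o}(\mathbf x)$ depend on the point only through its image coordinates (and on $f$, $\bm T$, $\bm\omega$, and the object velocity), never on $Z$. Homogeneity is therefore understood as follows: we scale $Z\mapsto\lambda Z$ along the viewing ray through $\mathbf x$, which means $(X,Y,Z)\mapsto\lambda(X,Y,Z)$, and the motion parameters stay unchanged.

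First, differentiate $x=fX/Z$ in time to get $\dot x=(f\dot X-x\dot Z)/Z$, and likewise for $y$. Then substitute the relative 3D velocity of the point in the camera frame, $\dot{\mathbf P}=-\bm T-\bm\omega\times\mathbf P+\mathbf V$, where $\mathbf V=(V_x,V_y,V_z)$ is the object's self-velocity.

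The three pieces of $\dot{\mathbf P}$ give the three terms of Eq.~\eqref{eq:decomp}:
\begin{itemize}
\item The $-\bm T$ piece gives $\tfrac1Z(xT_z-fT_x,\;yT_z-fT_y)=\tfrac1Z\mathbf M(\mathbf x)\bm T$.
\item The $\mathbf V$ piece gives $\tfrac1Z(fV_x-xV_z,\;fV_y-yV_z)=\tfrac1Z\mathbf V_{\!o}(\mathbf x)$.
\item The rotational piece $-\bm\omega\times\mathbf P$ is the one step needing care.
\end{itemize}

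For the rotational piece, each component of $\bm\omega\times\mathbf P$ is linear in $(X,Y,Z)$. Dividing by $Z$ and rewriting $X/Z=x/f$ and $Y/Z=y/f$ makes every $Z$ cancel. What remains is a polynomial in $(x,y)$ and $f$, namely $\mathbf B(\mathbf x)\bm\omega$. I would verify this entry by entry against the stated $\mathbf B$, up to the paper's sign convention for $\bm\omega$. This is where I expect the only real friction: the matrix in Eq.~\eqref{eq:flow} fixes a particular orientation convention, and the lemma's conclusion does not depend on which convention is used, only on the $Z$-cancellation.

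With the three forms in hand, the homogeneity claims are immediate. Under $Z\mapsto\lambda Z$ with $\mathbf x$ fixed:
\begin{itemize}
\item $\mathbf u_T\mapsto\lambda^{-1}\mathbf u_T$ and $\mathbf u_O\mapsto\lambda^{-1}\mathbf u_O$, so both are homogeneous of degree $-1$ in $Z$.
\item $\mathbf u_R$ is unchanged, i.e.\ degree $0$, so it is independent of $Z$.
\end{itemize}

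The final \emph{hence} is an interpretive consequence. Both $\mathbf u_T$ and $\mathbf u_O$ have the form $Z^{-1}\times(\text{depth-free field})$, so their depth signature, meaning their response to rescaling $Z$, is identical. Concretely, any reassignment $\bm T\to\bm T+\delta$ with $\mathbf V\to\mathbf V+\delta$ leaves $\mathbf u_T+\mathbf u_O$ unchanged pointwise, because $\mathbf M(\mathbf x)\delta$ equals the $\mathbf V_{\!o}$-map of $\delta$. This follows from the relative-velocity form, since the flow depends only on $\mathbf V-\bm T$. The example shows the two terms cannot be told apart by their depth dependence. By contrast, $\mathbf u_R$ has degree $0$ while the others have degree $-1$, so $\mathbf u_R$ is separable from them by depth scaling. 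I would state this last part as the precise meaning of ``indistinguishable by depth signature,'' and leave full non-identifiability to the later formal result.
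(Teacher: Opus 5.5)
Your proposal is correct and follows the paper's route. The paper's proof simply reads the $1/Z$ factors off Eq.~\eqref{eq:decomp}, whereas you also re-derive that decomposition from $\mathbf x=(fX/Z,fY/Z)$ with $\dot{\mathbf P}=-\bm T-\bm\omega\times\mathbf P+\mathbf V$, and make explicit that $\mathbf x$, $\bm T$, $\bm\omega$, $\mathbf V$ are held fixed while $Z$ is rescaled; under that convention your entrywise check reproduces the paper's $\mathbf B$ exactly, with no sign adjustment needed. One small slip in your optional closing remark: $\mathbf M(\mathbf x)\delta$ is the \emph{negative} of the $\mathbf V_{\!o}$-map of $\delta$, not equal to it, and that sign is what makes the shift $\bm T\to\bm T+\delta$, $\mathbf V\to\mathbf V+\delta$ cancel, as your relative-velocity justification (the flow depends only on $\mathbf V-\bm T$) correctly states.
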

\begin{proof}
Immediate from Eq.~\eqref{eq:decomp}: $\mathbf u_T,\mathbf u_O\propto1/Z$ while
$\mathbf u_R$ has no $Z$ dependence.
\end{proof}

\begin{theorem}[Non-identifiability of the camera/object split]\label{thm:nonid}
Fix the rotational component and let $\mathbf r(\mathbf x):=\mathbf u(\mathbf x)-
\mathbf B(\mathbf x)\bm\omega$ be the observed translational residual. Then for
\emph{every} camera baseline $\bm T\in\mathbb R^3$ and \emph{every} positive depth
field $Z(\cdot)$, the object field
\(
\mathbf V_{\!o}(\mathbf x):=Z(\mathbf x)\,\mathbf r(\mathbf x)-\mathbf M(\mathbf x)\bm T
\)
reproduces $\mathbf r$ exactly. Consequently $\bm T$ is unconstrained by
$\mathbf r$ alone, the decomposition lies in a $\ge\!3$-parameter gauge, and any
2D-conditioned model must resolve it by prior---incurring cross-talk wherever the
prior is wrong.
\end{theorem}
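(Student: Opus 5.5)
The argument is a direct substitution into the decomposition of Eq.~\eqref{eq:decomp}, followed by a count of the free parameters left after the substitution. We fix $\bm\omega$ and write the translational residual as $\mathbf r(\mathbf x)=\mathbf u_T+\mathbf u_O=\tfrac{1}{Z(\mathbf x)}\big(\mathbf M(\mathbf x)\bm T+\mathbf V_{\!o}(\mathbf x)\big)$. For an arbitrary candidate pair $(\bm T,Z(\cdot))$ with $Z>0$, we define $\mathbf V_{\!o}:=Z\mathbf r-\mathbf M\bm T$ as in the statement. Substituting back gives
\[
\tfrac{1}{Z}\big(\mathbf M\bm T+Z\mathbf r-\mathbf M\bm T\big)=\mathbf r
\]
pointwise. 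Division by $Z$ is legitimate because $Z$ is positive everywhere. This proves the reproduction claim.

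Two points need checking before the consequences follow. First, the constructed $\mathbf V_{\!o}$ must be an admissible object field. Eq.~\eqref{eq:decomp} parametrizes $\mathbf V_{\!o}(\mathbf x)=[fV_x-xV_z,\ fV_y-yV_z]^\top$, and for $f\neq 0$ the map $(V_x,V_y,V_z)\mapsto\mathbf V_{\!o}$ is surjective onto $\mathbb R^2$ at every pixel: take $V_z=0$ and $V_{x,y}=\mathbf V_{\!o}/f$. So every 2D field is realized by some per-pixel object velocity. This is the one place where the modeling assumption, that object velocity may vary freely across pixels, has to be stated explicitly. Second, we verify that depth is not needed for identification. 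Replacing $Z$ by any other positive field only reshuffles $\mathbf V_{\!o}$, so the depth field is gauge as well. By Lemma~\ref{lem:scale}, this scale freedom is exactly the shared $1/Z$ homogeneity of $\mathbf u_T$ and $\mathbf u_O$.

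The parameter count follows. The set of pairs $(\bm T,\mathbf V_{\!o})$ consistent with a fixed $\mathbf r$, even at a fixed $Z$, contains the image of the injective affine map $\bm T\mapsto(\bm T,\,Z\mathbf r-\mathbf M\bm T)$ from $\mathbb R^3$. It is therefore a family of dimension at least $3$, and a free depth field only enlarges it. Hence $\mathbf r$ places no constraint on $\bm T$.

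The closing sentence about priors and cross-talk is interpretive rather than formal. We would state it as a corollary. Suppose a 2D-conditioned model outputs a camera estimate $\hat{\bm T}(\mathbf r)$ as a function of $\mathbf r$ alone. By the above, two scenes with different true $\bm T$ can share the same $\mathbf r$. At least one of them then receives a wrong camera, and the object field is miscomputed by $\mathbf M(\hat{\bm T}-\bm T)$, which is precisely the cross-talk.

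The main obstacle is admissibility. The gauge is genuinely $3$-dimensional only if object motion is unrestricted per pixel. If $\mathbf V_{\!o}$ were required to come from a rigid object with smooth depth, the gauge would shrink. We therefore state the result under the per-pixel free-velocity model that the theorem implicitly adopts.
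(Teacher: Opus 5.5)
Your proof is correct and follows the paper's approach: the core step is the direct substitution $\tfrac1Z(\mathbf M\bm T+Z\mathbf r-\mathbf M\bm T)=\mathbf r$, which holds for every $\bm T$ and every positive $Z(\cdot)$, and the freedom in $\bm T$ (and $Z$) is read off from it. Your per-pixel admissibility check (surjectivity of $(V_x,V_y,V_z)\mapsto\mathbf V_{\!o}$ when $f\neq0$) and your injective map $\bm T\mapsto(\bm T,Z\mathbf r-\mathbf M\bm T)$ usefully tighten the paper's one-line remark that $(\bm T,Z)\mapsto\mathbf V_{\!o}$ is ``surjective onto valid object fields.'' That remark is loosely phrased---with $\mathbf r$ fixed, the map has only one free scalar per pixel plus three global parameters---whereas your version states exactly the fact the argument needs.
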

\begin{proof}
By construction $\tfrac1Z(\mathbf M\bm T+\mathbf V_{\!o})=
\tfrac1Z(\mathbf M\bm T+Z\mathbf r-\mathbf M\bm T)=\mathbf r$ for all
$(\bm T,Z)$. The map $(\bm T,Z)\mapsto\mathbf V_{\!o}$ is surjective onto valid
object fields, so the preimage of any observed $\mathbf r$ is a family of
mutually consistent (camera, object, depth) explanations.
\end{proof}

\noindent Theorem~\ref{thm:nonid} is the formal statement of ``2D conditioning
entangles them'': the leakage is not a training artifact but the model's forced
choice within an equivalence class. The remedy is to represent the camera so
that it \emph{cannot} masquerade as the per-pixel object term---a global,
depth-free, norm-preserving operator---which is exactly the geometric channel
below.

\section{The OrthoMotion Framework}\label{sec:method}
We build on Wan2.1~\cite{Wan2025}, a flow-matching diffusion
transformer~\cite{lipman2023flow,peebles2023scalable} whose blocks interleave
RoPE-equipped self-attention, text cross-attention, and an MLP. A visual token
$i$ sits at index $\mathbf p_i=(t_i,h_i,w_i)$; the camera is a pose sequence
$g_t\in SE(3)$ and the subject a path $\tau=\{(\mathbf c_t,M_t)\}$ of centroids
with a soft mask. We seek an operator in which $g$ and $\tau$ never collide
(Fig.~\ref{fig:overview}).

\subsection{Geometric channel: camera \texorpdfstring{$\to$}{->} phase}\label{sec:geo}
Standard RoPE rotates queries/keys by a position-dependent
$R(\mathbf p)\in SO(d_h)$, so the logit
$\tilde{\mathbf q}_i^{\top}\tilde{\mathbf k}_j=
\mathbf q_i^{\top}R(\mathbf p_i)^{\top}R(\mathbf p_j)\mathbf k_j$ depends only on
the relative index~\cite{su2021roformer}. Because camera motion is a global
geometric warp, we compose an extra orthogonal factor onto this rotation,
\begin{equation}\label{eq:omega}
\tilde{\mathbf q}_i=\Omega(g_{t_i})\,R(\mathbf p_i)\,\mathbf q_i,\qquad
\Omega(g)=\exp\!\Big(\textstyle\sum_l a_l(g)\,G_l\Big),
\end{equation}
with skew-symmetric generators $G_l^{\top}=-G_l$ and $a_l=\phi_g(g)$ (and
likewise $\tilde{\mathbf k}_j$). The logit is then modulated by the relative view
transform $\Phi_{ij}=\Omega(g_{t_i})^{\top}\Omega(g_{t_j})$.

\begin{proposition}[Norm preservation]\label{prop:norm}
$\Omega(g)\in SO(d_h)$ and $\lVert\Omega\mathbf q\rVert=\lVert\mathbf q\rVert$ for
all $\mathbf q$; moreover $\Phi_{ij}\in SO(d_h)$ and the modulated logit equals
$\mathbf q_i^{\top}R(\mathbf p_i)^{\top}\Phi_{ij}R(\mathbf p_j)\mathbf k_j$.
\end{proposition}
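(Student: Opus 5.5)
The proposition follows from standard facts about the matrix exponential of skew-symmetric matrices, so the plan is a short chain of algebraic verifications.

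\textbf{Step 1: $\Omega(g)$ lies in $SO(d_h)$.} Set $A:=\sum_l a_l(g)G_l$. The space of skew-symmetric matrices is a linear subspace, so $A^{\top}=-A$. By the power series, $\exp(A)^{\top}=\exp(A^{\top})$, which gives
\[
\Omega^{\top}\Omega=\exp(-A)\exp(A)=I .
\]
The second equality holds because $A$ and $-A$ commute. Hence $\Omega$ is orthogonal. For the determinant, use Jacobi's formula $\det\exp(A)=e^{\operatorname{tr}A}$, together with $\operatorname{tr}A=0$ for skew-symmetric $A$, to get $\det\Omega=1$. An alternative for this step is connectedness: the path $s\mapsto\exp(sA)$ joins $I$ to $\Omega$ inside $O(d_h)$, and $\det$ is continuous with values in $\{\pm1\}$, so it stays at $1$.

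\textbf{Step 2: norm preservation.} Compute
\[
\lVert\Omega\mathbf q\rVert^2=\mathbf q^{\top}\Omega^{\top}\Omega\mathbf q=\lVert\mathbf q\rVert^2 .
\]

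\textbf{Step 3: $\Phi_{ij}$ lies in $SO(d_h)$.} $SO(d_h)$ is a group and is closed under transposition, since $\Omega^{\top}=\Omega^{-1}$. Therefore $\Phi_{ij}=\Omega(g_{t_i})^{\top}\Omega(g_{t_j})$ is a product of two elements of $SO(d_h)$ and lies in $SO(d_h)$.

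\textbf{Step 4: the logit identity.} Substitute the definitions from Eq.~\eqref{eq:omega}, $\tilde{\mathbf q}_i=\Omega(g_{t_i})R(\mathbf p_i)\mathbf q_i$ and $\tilde{\mathbf k}_j=\Omega(g_{t_j})R(\mathbf p_j)\mathbf k_j$, and expand the transpose of the product:
\[
\tilde{\mathbf q}_i^{\top}\tilde{\mathbf k}_j=\mathbf q_i^{\top}R(\mathbf p_i)^{\top}\Omega(g_{t_i})^{\top}\Omega(g_{t_j})R(\mathbf p_j)\mathbf k_j .
\]
The middle factor $\Omega(g_{t_i})^{\top}\Omega(g_{t_j})$ is exactly $\Phi_{ij}$, which gives the stated form $\mathbf q_i^{\top}R(\mathbf p_i)^{\top}\Phi_{ij}R(\mathbf p_j)\mathbf k_j$.

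\textbf{Main obstacle.} The only non-routine point is the determinant claim in Step 1, and it is settled by the trace identity. No step requires the generators $G_l$ to commute with each other or with $R(\mathbf p)$. The ordering $\Omega R$ in Eq.~\eqref{eq:omega} is respected throughout, so this is not a gap.
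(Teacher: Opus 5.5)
Your proof is correct and takes the same route as the paper's. The paper also argues that the exponential of a skew-symmetric matrix is orthogonal with unit determinant, that $SO(d_h)$ is closed under products and transposes (giving $\Phi_{ij}\in SO(d_h)$), and that orthogonal maps are isometries; you additionally spell out what the paper leaves implicit, namely the determinant via $\det\exp(A)=e^{\operatorname{tr}A}$ and the expansion of $\tilde{\mathbf q}_i^{\top}\tilde{\mathbf k}_j$ that yields the logit identity.
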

\begin{proof}
The exponential of a skew-symmetric matrix is orthogonal with unit determinant,
so $\Omega^{\top}\Omega=\mathbf I$; products and transposes of such matrices stay
in $SO(d_h)$, and orthogonal maps preserve the Euclidean norm.
\end{proof}

\begin{proposition}[Relative-pose dependence]\label{prop:rel}
If $a_l(g)=\langle A_l,\xi(g)\rangle$ is linear in the Lie coordinates
$\xi(g)\in\mathbb R^6$ of $g\in SE(3)$ and the activated generators commute on
the relevant subspace, then
$\Phi_{ij}=\exp\!\big(\sum_l\langle A_l,\xi(g_{t_j})-\xi(g_{t_i})\rangle G_l\big)$
depends only on the relative pose; in particular $\Phi_{ii}=\mathbf I$.
\end{proposition}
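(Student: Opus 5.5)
The argument is a direct computation with the matrix exponential, using commutativity to merge exponentials. Write $S(g):=\sum_l a_l(g)G_l$, so that $\Omega(g)=\exp(S(g))$.

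\textbf{Step 1 (transpose).} Since each $G_l$ is skew-symmetric, $S(g)^\top=-S(g)$. Because transposition commutes with the exponential series, $\Omega(g)^\top=\exp(S(g)^\top)=\exp(-S(g))$. This is consistent with Proposition~\ref{prop:norm}, where $\Omega^\top=\Omega^{-1}$.

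\textbf{Step 2 (merging).} Next combine $\Phi_{ij}=\exp(-S(g_{t_i}))\exp(S(g_{t_j}))$ into a single exponential $\exp(S(g_{t_j})-S(g_{t_i}))$. This requires $[S(g_{t_i}),S(g_{t_j})]=0$. By bilinearity of the commutator,
\[
[S(g_{t_i}),S(g_{t_j})]=\sum_{l,m}a_l(g_{t_i})\,a_m(g_{t_j})\,[G_l,G_m].
\]
This vanishes because the activated generators commute pairwise on the relevant subspace; in practice they would be block-diagonal $2\times 2$ rotations acting on disjoint RoPE planes. The identity $e^Ae^B=e^{A+B}$ for commuting $A,B$ then gives the merged form.

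I expect this to be the main obstacle: what ``commute on the relevant subspace'' means has to be pinned down. I would read it as follows. There is a subspace $V$, invariant under every $G_l$, on which the $G_l$ pairwise commute. On its orthogonal complement the $G_l$ act as zero, so the factors there are the identity. Then the exponential identity holds blockwise, and the two blocks combine to give the claim on all of $\mathbb R^{d_h}$.

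\textbf{Step 3 (relative-pose dependence).} Linearity of $a_l$ gives
\[
S(g_{t_j})-S(g_{t_i})=\sum_l\big(\langle A_l,\xi(g_{t_j})\rangle-\langle A_l,\xi(g_{t_i})\rangle\big)G_l=\sum_l\langle A_l,\xi(g_{t_j})-\xi(g_{t_i})\rangle G_l.
\]
This is the displayed formula. The case $i=j$ gives $\exp(0)=\mathbf I$.

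A remark on the phrase ``relative pose'': $\xi(g_j)-\xi(g_i)$ equals $\xi(g_i^{-1}g_j)$ only to first order, or exactly when the poses themselves commute. I would therefore state the result as dependence on the difference of Lie coordinates, and note this caveat explicitly.
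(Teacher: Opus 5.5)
Your proof is correct and is the paper's proof written out in full. Both use skew-symmetry to get $\Omega(g)^{\top}=\exp(-S(g))$, then commutativity to merge $\exp(-S(g_{t_i}))\exp(S(g_{t_j}))$ into a single exponential, then linearity of $a_l$, with $\Phi_{ii}=\exp(0)=\mathbf I$. Your refinements also hold: Step 1 needs only skew-symmetry, and the result is really dependence on $\xi(g_{t_j})-\xi(g_{t_i})$, which matches the coordinates of $g_{t_i}^{-1}g_{t_j}$ only to first order in general; the paper's one-line proof glosses over both points.
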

\begin{proof}
For commuting skew generators,
$\Omega(g)^{\top}=\exp(-\sum_l\langle A_l,\xi\rangle G_l)$, whence
$\Omega(g_{t_i})^{\top}\Omega(g_{t_j})=
\exp(\sum_l\langle A_l,\xi_j-\xi_i\rangle G_l)$.
\end{proof}

\begin{theorem}[Generative-prior preservation]\label{thm:prior}
Among realizations of a prescribed relative modulation $\Phi_{ij}$, the
orthogonal (norm-preserving) choice acting on $\mathbf q,\mathbf k$ and leaving
$\mathbf v$ untouched is the \emph{unique} one that preserves (i) every token
norm, (ii) the softmax temperature/partition geometry, and (iii) the output
magnitude. Equivalently, it is the minimal-distortion injection: orthogonal maps
are exactly the isometries of the inner product that defines the logit, so any
non-orthogonal realization strictly alters $\lVert\mathbf q\rVert,\lVert\mathbf
k\rVert$ and hence the effective temperature.
\end{theorem}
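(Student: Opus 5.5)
The plan is to first turn the informal statement into a precise claim, and then reduce every part of it to the polarization identity for inner products.

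\textbf{Setup.} A \emph{realization} of the prescribed modulation will be a family of linear maps $(P_i,Q_j,W_j)$ acting on queries, keys and values respectively. It is required to reproduce the target logit bilinear form for all $\mathbf q,\mathbf k$:
\[
(P_i\mathbf q)^{\top}(Q_j\mathbf k)=\mathbf q^{\top}R(\mathbf p_i)^{\top}\Phi_{ij}R(\mathbf p_j)\mathbf k .
\]
Equivalently, $P_i^{\top}Q_j=R(\mathbf p_i)^{\top}\Phi_{ij}R(\mathbf p_j)$. Our choice $P_i=\Omega(g_{t_i})R(\mathbf p_i)$, $Q_j=\Omega(g_{t_j})R(\mathbf p_j)$, $W_j=\mathbf I$ is one such realization, and it lies in $SO(d_h)$ by Proposition~\ref{prop:norm}.

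The data alone cannot give literal uniqueness, because of a gauge freedom. For any fixed $U\in O(d_h)$, the substitution $(P_i,Q_j)\mapsto(UP_i,UQ_j)$ leaves every logit unchanged. I will therefore state and prove uniqueness modulo this common orthogonal gauge. I will also make explicit that ``leaving $\mathbf v$ untouched'' means $W_j=\mathbf I$, up to an isometry that does not change magnitudes.

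\textbf{Step 1 (token norms, part (i)).} Suppose $\lVert P_i\mathbf q\rVert=\lVert\mathbf q\rVert$ for all $\mathbf q$. By polarization, $\langle P_i\mathbf q,P_i\mathbf q'\rangle=\langle\mathbf q,\mathbf q'\rangle$, so $P_i^{\top}P_i=\mathbf I$; the same argument gives $Q_j^{\top}Q_j=\mathbf I$. The contrapositive is the ``strictly alters'' clause. If $P_i^{\top}P_i\neq\mathbf I$, this symmetric matrix has an eigenvector $\mathbf q$ with eigenvalue $\lambda\neq1$, and then $\lVert P_i\mathbf q\rVert^2=\lambda\lVert\mathbf q\rVert^2$. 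So the norm-preserving realizations are exactly those with orthogonal $P_i,Q_j$.

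Given that, fix $i_0$ and set $U=P_{i_0}R(\mathbf p_{i_0})^{\top}\Omega(g_{t_{i_0}})^{\top}$, which is orthogonal. The constraint $P_{i_0}^{\top}Q_j=R(\mathbf p_{i_0})^{\top}\Phi_{i_0j}R(\mathbf p_j)$ can then be solved for $Q_j$, using only orthogonality, and yields $Q_j=U\,\Omega(g_{t_j})R(\mathbf p_j)$. Substituting this back into the remaining constraints gives $P_i=U\,\Omega(g_{t_i})R(\mathbf p_i)$. This establishes uniqueness up to the gauge $U$.

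\textbf{Step 2 (temperature, part (ii)).} Write each logit as $\lVert\mathbf q\rVert\lVert\mathbf k\rVert\cos\theta/\sqrt{d_h}$. The softmax then sees an effective temperature $\sqrt{d_h}/(\lVert\mathbf q\rVert\lVert\mathbf k\rVert)$ together with angular geometry given by the cosines. Orthogonal $P_i,Q_j$ preserve both. Conversely, by Step 1 any non-orthogonal choice rescales some $\lVert\mathbf q\rVert$ or $\lVert\mathbf k\rVert$, and hence changes the temperature that the backbone's pretrained prior was calibrated for.

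\textbf{Step 3 (output magnitude, part (iii)).} The output $\sum_j\alpha_{ij}W_j\mathbf v_j$ is a convex combination of the transformed values. Its norm is preserved for every $\mathbf v$ only if each $W_j$ is an isometry, by the same polarization argument as in Step 1. It matches the unmodified backbone output exactly only when $W_j=\mathbf I$, because the attention weights $\alpha_{ij}$ are already fixed by the logit constraint. Finally, the ``minimal distortion'' phrasing follows directly: a realization has zero deviation from isometry, measured by $\lVert P_i^{\top}P_i-\mathbf I\rVert$, exactly when it is orthogonal.

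\textbf{Main obstacle.} The hard part is not any single computation but making ``unique'' and ``softmax partition geometry'' precise enough to be true. I expect the hinge to be the gauge-quotient argument in Step 1, where uniqueness is recovered from a fixed reference index $i_0$. There I must verify that the $P_i$ obtained from different reference indices agree, which reduces to consistency of $\Phi_{ij}$, namely $\Phi_{ij}\Phi_{jk}=\Phi_{ik}$. This consistency holds by the definition $\Phi_{ij}=\Omega(g_{t_i})^{\top}\Omega(g_{t_j})$.
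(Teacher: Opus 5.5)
Your argument is correct and rests on the same core fact as the paper's proof. Norm-preserving linear maps are exactly the orthogonal ones: your polarization step is the paper's assertion that the isometry group of the logit inner product is $O(d_h)$. Both proofs also use the convex-combination remark on the value side.

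What you add is a precise notion of realization, the exact factorization $P_i^{\top}Q_j=R(\mathbf p_i)^{\top}\Phi_{ij}R(\mathbf p_j)$, and an explicit solution of it. This gives a correct form of the uniqueness claim, which the paper's statement and proof gloss over. The general solution is $P_i=S\,\Omega(g_{t_i})R(\mathbf p_i)$, $Q_j=S^{-\top}\Omega(g_{t_j})R(\mathbf p_j)$ with $S\in GL(d_h)$. Norm preservation cuts this exactly to $S\in O(d_h)$, so uniqueness holds only modulo a global orthogonal gauge.

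The consistency check you flag as the main obstacle is automatic. One reference index $i_0$ fixes every $Q_j$, and each $P_i$ is then forced by any single $j$, since $\Omega(g_{t_j})R(\mathbf p_j)$ is invertible.

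Your formalization also exposes something you should state rather than paper over. Because it pins the logits, every realization in the $GL(d_h)$ orbit yields identical softmax weights and, with $W_j=\mathbf I$, identical outputs. The paper's mechanism, a rescaling that changes $\langle\mathbf q,\mathbf k\rangle$ by the same factor, is not a realization in your sense at all. So (ii) and (iii) do not discriminate among realizations. Your Step 2 holds only because the statement defines effective temperature through $\lVert\mathbf q\rVert\lVert\mathbf k\rVert$, with any change in norms exactly compensated by the $\mathbf q$--$\mathbf k$ cosines. Those cosines are also not \emph{preserved} by orthogonal maps, as Step 2 claims; they are set to the prescribed modulated values. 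Uniqueness is therefore carried entirely by (i).
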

\begin{proof}
Logits are inner products $\langle\mathbf q,\mathbf k\rangle$; the isometry group
of this form is $O(d_h)$, and only its elements leave all norms and pairwise
angles fixed while inducing a prescribed relative rotation. A realization that
scales magnitudes changes $\langle\mathbf q,\mathbf k\rangle$ by the same factor,
rescaling the softmax temperature; acting as identity on $\mathbf v$ keeps the
attention output a convex combination of the original values. Thus the
orthogonal map is the unique norm/temperature/output-preserving realization.
\end{proof}

\noindent Theorem~\ref{thm:prior} is precisely where we depart from prior
camera-in-RoPE designs: ReRoPE's projective embedding is non-norm-preserving and
needs explicit stabilization~\cite{Li2026ReRoPE}, and additive Pl\"ucker
injection~\cite{He2025CameraCtrl} perturbs token magnitudes; our $SO(d_h)$ phase
frees the camera from the $1/Z$ gauge of Eq.~\eqref{eq:flow} \emph{without}
disturbing the frozen model's attention statistics.

\subsection{Semantic channel: subject \texorpdfstring{$\to$}{->} content}
Matching the bias of cross-attention content, we encode $\tau$ into tokens
$\mathbf Z_\tau=\phi_s(\tau)$, append them to the textual keys/values, and gate
them to the subject region:
\begin{equation}\label{eq:sem}
\mathbf o_i=\sum_{j\in\mathcal T}\alpha_{ij}\mathbf v_j+
\sum_{k\in\tau}\beta_{ik}\,\mathbf v^{\tau}_{k},\quad
\beta_{ik}\propto M_{t_i}(\mathbf x_i)\,e^{\langle\mathbf q_i,\mathbf k^{\tau}_{k}\rangle/\sqrt d}.
\end{equation}

\begin{proposition}[Additivity and locality]\label{prop:sem}
The update \eqref{eq:sem} is an additive translation in value space:
$\partial\mathbf o_i/\partial\mathbf v^{\tau}_k=\beta_{ik}$ is independent of the
camera phase $\Omega$, and $\beta_{ik}$ is supported on
$\{i:M_{t_i}(\mathbf x_i)>0\}$. Hence subject control neither alters the
query/key geometry (it cannot move the camera phase) nor leaks outside the
subject mask.
\end{proposition}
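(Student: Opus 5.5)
The plan is to read the statement as a claim about the cross-attention operator in isolation. The queries $\mathbf q_i$, the textual keys and values, and the trajectory tokens $\mathbf Z_\tau$ (hence $\mathbf k^\tau_k,\mathbf v^\tau_k$) are treated as its inputs, and each clause then follows directly from the explicit form of Eq.~\eqref{eq:sem}. I first fix the normalization. Write $\beta_{ik}=M_{t_i}(\mathbf x_i)\,e^{\langle\mathbf q_i,\mathbf k^\tau_k\rangle/\sqrt d}/N_i$, where $N_i>0$ is the partition function. $N_i$ is either the joint sum over textual and trajectory logits or the separate sum over $\tau$; the argument works for either choice. The one fact I need is that $N_i$ is strictly positive and does not involve any $\mathbf v$.

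\emph{Additivity.} The map $(\mathbf v_j,\mathbf v^\tau_k)\mapsto\mathbf o_i$ is affine in the values, with weights $\alpha_{ij},\beta_{ik}$ that depend only on queries and keys. Differentiating gives $\partial\mathbf o_i/\partial\mathbf v^\tau_k=\beta_{ik}\mathbf I$. For fixed attention weights, changing the subject tokens therefore translates $\mathbf o_i$ within value space. This is the ``translation'' half of the rotation/translation dichotomy.

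\emph{Independence from $\Omega$.} By Eq.~\eqref{eq:omega}, $\Omega(g_{t_i})$ is composed only onto the RoPE rotation $R(\mathbf p_i)$ of the self-attention queries and keys. The cross-attention logits $\langle\mathbf q_i,\mathbf k^\tau_k\rangle$ carry no rotary phase, so $\Omega$ appears in neither $\beta_{ik}$ nor $N_i$, and $\beta_{ik}$ is $\Omega$-free. Conversely, $\mathbf Z_\tau$ enters only the appended cross-attention keys and values. It never enters any self-attention $\tilde{\mathbf q}_i,\tilde{\mathbf k}_j$, so it cannot change $\Phi_{ij}$. Proposition~\ref{prop:norm} then shows that the self-attention query/key geometry (norms and relative rotations) is set by $g$ alone.

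\emph{Locality.} Since $e^{(\cdot)}>0$ and $N_i>0$, we have $\beta_{ik}\neq0$ exactly when $M_{t_i}(\mathbf x_i)>0$. For a token outside the mask the second sum vanishes identically, so its output and its derivative with respect to $\mathbf v^\tau_k$ are both zero.

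The main obstacle is making the $\Omega$-independence honest. Through the network, $\mathbf q_i$ in a later cross-attention layer is computed from hidden states produced by earlier self-attention layers that do contain $\Omega$. The proposition therefore holds at the level of the single operator, or equivalently to first order with upstream activations held fixed, not for the full composed network. I will state this scope explicitly. I will also note that the residual, network-level coupling is exactly what $\Lorth$ is designed to suppress, so the proposition isolates the operator-level guarantee and leaves the global one to the regularizer. A secondary subtlety concerns the joint-softmax choice of $N_i$. There the trajectory logits also rescale the $\alpha_{ij}$, so I will phrase additivity as a statement about the value-space derivative rather than claim that $\mathbf o_i$ is literally ``text output plus subject term.''
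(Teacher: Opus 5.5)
Your proof is correct and follows the paper's own argument. You differentiate Eq.~\eqref{eq:sem} and note that the subject term enters linearly through the values, with weights $\beta_{ik}$ built from cross-attention logits that carry no rotary phase; locality then follows because the gate multiplies $\beta_{ik}$. Your added care is the right scope for the claim, and the paper's one-line proof leaves it tacit: $\Omega$-independence holds only for the single operator, with upstream activations and hence $\mathbf q_i$ held fixed, and the gate sits outside a strictly positive normalizer $N_i$, so it cannot cancel.
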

\begin{proof}
Differentiate \eqref{eq:sem}; the injected term enters linearly through values
only, and the gate $M_{t_i}$ multiplies $\beta_{ik}$, vanishing off-support.
\end{proof}

\subsection{Complementarity and the decoupling regularizer}
The two channels act on complementary parts of the affine action on tokens: the
camera uses the \emph{rotational} part ($\Omega$, Prop.~\ref{prop:norm}), the
subject the \emph{translational} part (additive $\mathbf v^{\tau}$,
Prop.~\ref{prop:sem}). They are linked only through the residual stream. Let
$\Jg=\partial\mathbf O/\partial g$ and $\Jt=\partial\mathbf O/\partial\tau$ be the
column-normalized response Jacobians (estimated by stochastic finite
differences), and define
\begin{equation}\label{eq:lperp}
\Lorth=\lVert\Jg^{\top}\Jt\rVert_F^2 .
\end{equation}

\begin{theorem}[Cross-talk bound]\label{thm:ct}
To first order $\delta\mathbf O=\Jg\,\delta g+\Jt\,\delta\tau$. The leakage of a
subject edit into the camera-response subspace, $L_{s\to c}=\lVert
P_g\Jt\,\delta\tau\rVert$ with $P_g$ the orthogonal projector onto
$\mathrm{col}(\Jg)$, satisfies
\[
\lVert P_g\Jt\rVert_F=\sqrt{\Lorth}\ \text{(orthonormal columns)},
\]
\[
\lVert P_g\Jt\rVert_F\le \frac{\sqrt{\Lorth}}{\sigma_{\min}(\Jg)}\ \text{(general)} .
\]
The symmetric bound holds for $L_{c\to s}$. Hence $\Lorth\!\to\!0$ drives
first-order cross-talk to $0$ in both directions.
\end{theorem}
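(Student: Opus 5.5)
The first-order expansion $\delta\mathbf O=\Jg\,\delta g+\Jt\,\delta\tau$ is just the definition of the Jacobians $\Jg,\Jt$ as the differential of $\mathbf O$ in $(g,\tau)$, so the content lies in the projector identities. I will write $P_g=\Jg(\Jg^{\top}\Jg)^{-1}\Jg^{\top}$, assuming $\Jg$ has full column rank. This assumption is needed anyway for $\sigma_{\min}(\Jg)>0$ in the general bound to be meaningful. Then $P_g\Jt=\Jg(\Jg^{\top}\Jg)^{-1}\Jg^{\top}\Jt$, and everything reduces to controlling this product in terms of $\Jg^{\top}\Jt$, whose squared Frobenius norm is $\Lorth$ by \eqref{eq:lperp}.

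For the orthonormal case, $\Jg^{\top}\Jg=\mathbf I$, so $P_g\Jt=\Jg(\Jg^{\top}\Jt)$. Since $\Jg$ is an isometry from its column space, left multiplication by it preserves Frobenius norm: $\lVert\Jg X\rVert_F^2=\mathrm{tr}(X^{\top}\Jg^{\top}\Jg X)=\lVert X\rVert_F^2$. This gives $\lVert P_g\Jt\rVert_F=\lVert\Jg^{\top}\Jt\rVert_F=\sqrt{\Lorth}$ exactly.

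For the general case, I will use the thin SVD $\Jg=U\Sigma V^{\top}$, with $U$ orthonormal and $\Sigma$ invertible. Then $P_g=UU^{\top}$ and $U^{\top}=\Sigma^{-1}V^{\top}\Jg^{\top}$. Hence
\[
\lVert P_g\Jt\rVert_F=\lVert U^{\top}\Jt\rVert_F=\lVert\Sigma^{-1}V^{\top}\Jg^{\top}\Jt\rVert_F\le\lVert\Sigma^{-1}\rVert_2\,\lVert\Jg^{\top}\Jt\rVert_F=\frac{\sqrt{\Lorth}}{\sigma_{\min}(\Jg)},
\]
using $\lVert AB\rVert_F\le\lVert A\rVert_2\lVert B\rVert_F$ and the fact that $V^{\top}$ is orthogonal.

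To pass from matrices to leakage vectors, I will use $L_{s\to c}=\lVert P_g\Jt\,\delta\tau\rVert\le\lVert P_g\Jt\rVert_2\lVert\delta\tau\rVert\le\lVert P_g\Jt\rVert_F\lVert\delta\tau\rVert$. The symmetric statement for $L_{c\to s}$ follows by swapping roles, since $\lVert\Jt^{\top}\Jg\rVert_F=\lVert\Jg^{\top}\Jt\rVert_F$; the general bound there involves $\sigma_{\min}(\Jt)$.

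The main obstacle is interpretive rather than computational. The column normalization of $\Jg,\Jt$ does not make the columns orthonormal, so the exact identity only holds under the stated orthonormality hypothesis. Likewise, ``$\Lorth\to0$ drives cross-talk to $0$'' requires $\sigma_{\min}(\Jg)$ to stay bounded away from zero along the limit. I would state this uniform lower bound as an explicit assumption, i.e., non-degenerate camera response. I would also note that the finite-difference Jacobian estimates enter only through the first-order model and do not affect these algebraic bounds.
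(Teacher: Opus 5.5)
Your proposal is correct and is essentially the paper's argument: $P_g=\Jg(\Jg^{\top}\Jg)^{-1}\Jg^{\top}$ collapses to $\Jg\Jg^{\top}$ in the orthonormal case, and in general an SVD pulls out a factor of spectral norm $1/\sigma_{\min}(\Jg)$ in front of $\Jg^{\top}\Jt$. The paper writes $\Jg(\Jg^{\top}\Jg)^{-1}=U\Sigma^{-1}V^{\top}$, while you first drop $U$ and use $U^{\top}=\Sigma^{-1}V^{\top}\Jg^{\top}$, which is the same factorization; your caveat that $\Lorth\to 0$ forces vanishing cross-talk only if $\sigma_{\min}(\Jg)$ (respectively $\sigma_{\min}(\Jt)$) stays bounded away from zero is a genuine extra hypothesis that the paper's final sentence leaves implicit, since column normalization alone does not rule out nearly parallel columns.
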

\begin{proof}
$P_g=\Jg(\Jg^{\top}\Jg)^{-1}\Jg^{\top}$; with orthonormal columns
$P_g=\Jg\Jg^{\top}$ and $\lVert P_g\Jt\rVert_F=\lVert\Jg^{\top}\Jt\rVert_F=
\sqrt{\Lorth}$. In general, writing $\Jg=U\Sigma V^{\top}$,
$\Jg(\Jg^{\top}\Jg)^{-1}=U\Sigma^{-1}V^{\top}$ has spectral norm
$1/\sigma_{\min}(\Jg)$, so $\lVert P_g\Jt\rVert_F\le
\lVert\Jg^{\top}\Jt\rVert_F/\sigma_{\min}(\Jg)$.
\end{proof}

\begin{corollary}[Guaranteed decoupling]\label{cor:guar}
Because the channels excite complementary affine generators, the design point
$\Lorth=0$ is attainable; the regularizer drives the model to it. Disentanglement
is therefore enforced by construction rather than left to emerge---in contrast to
prior controllers that share a conditioning pathway.
\end{corollary}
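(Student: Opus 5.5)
The plan is to split the corollary into two claims and argue each from earlier results. The first claim is \emph{attainability}: there is a parameter configuration of the GSO block with $\Lorth=0$. The second is \emph{enforcement}: penalizing $\Lorth$ selects such configurations, and at them first-order cross-talk vanishes. The second claim follows directly from Theorem~\ref{thm:ct} once the first is in hand, so nearly all the work is in an explicit construction. One caveat up front: ``drives the model to it'' cannot mean global convergence of a non-convex optimizer. I will prove the weaker, honest statement that $\Lorth\ge 0$ has a nonempty zero set compatible with both controls, and that every point of that set has zero first-order cross-talk in both directions.

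\textbf{Step 1 (locating the two responses).} Start from the block output $\mathbf O=\mathbf O_{\mathrm{sa}}+\mathbf O_{\mathrm{ca}}$, where the two terms are the self- and cross-attention contributions joined through the residual stream. By Propositions~\ref{prop:norm} and~\ref{prop:rel}, $g$ enters only through the phases $\Omega(g_t)$ on $\mathbf q,\mathbf k$ in self-attention. Hence $\Jg$ has columns in $W_{\mathrm{sa}}\,\mathrm{span}\{\mathbf v_j^{\mathrm{sa}}\}$, since changing attention weights only re-mixes existing values. By Proposition~\ref{prop:sem}, $\tau$ enters only additively through $\mathbf v^\tau_k$, with coefficients $\beta_{ik}$ that do not depend on $\Omega$. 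Hence $\Jt$ has columns in $W_{\mathrm{ca}}\,\mathrm{span}\{\partial\mathbf v^\tau_k/\partial\tau\}$, plus the dependence of $\beta_{ik}$ on $\mathbf k^\tau_k$. That dependence again lies in $W_{\mathrm{ca}}\,\mathrm{span}\{\mathbf v^\tau_k,\mathbf v_j\}$. Here $W_{\mathrm{sa}}$ and $W_{\mathrm{ca}}$ denote the output projections of the two attention layers.

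\textbf{Step 2 (construction).} Choose the output projections so that $\mathrm{range}(W_{\mathrm{sa}})\perp\mathrm{range}(W_{\mathrm{ca}})$ in the residual width $D$. This is possible whenever $D\ge \mathrm{rank}\,W_{\mathrm{sa}}+\mathrm{rank}\,W_{\mathrm{ca}}$. Then every column of $\Jg$ is orthogonal to every column of $\Jt$, so $\Jg^{\top}\Jt=0$ and $\Lorth=0$. Column normalization in~\eqref{eq:lperp} does not affect this, because zero inner products stay zero. Each channel is still nondegenerate, since its own range is untouched, so neither control is lost. This is exactly the ``complementary generators'' heuristic made concrete: the rotational part and the translational part are routed to disjoint output subspaces.

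\textbf{Step 3 (enforcement).} Since $\Lorth\ge0$ and its zero set is nonempty by Step~2, the zero set is exactly the set of global minimizers of the regularizer. On that set, Theorem~\ref{thm:ct} gives $\lVert P_g\Jt\rVert_F\le\sqrt{\Lorth}/\sigma_{\min}(\Jg)=0$, provided $\sigma_{\min}(\Jg)>0$. The symmetric bound gives the same for $L_{c\to s}$. So both first-order leakages vanish.

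\textbf{Main obstacle.} The difficult point is compatibility with the task loss. The orthogonal-range construction restricts $W_{\mathrm{sa}}$ and $W_{\mathrm{ca}}$, so I must argue it does not cost the pretrained backbone its fit. My first attempt would be a width argument: the control-specific responses are low-rank (about 6 for pose and a handful for trajectory tokens), so they can be confined to a small reserved subspace while the remaining directions of $D$ stay free. If that argument fails, I would weaken the corollary to the statement that $\Lorth$ reaches zero on a nonempty feasible set, and that any decrease in $\Lorth$ gives a proportional decrease in the cross-talk bound.
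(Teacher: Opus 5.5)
\textbf{Verdict: genuine gap.} Your Step~3 matches what the paper offers. The corollary has no separate proof; it rests on Theorem~\ref{thm:ct} plus the bare assertion that ``complementary affine generators'' make $\Lorth=0$ attainable. You are right that attainability is the real content. Your caveat is also fair: $\mathcal L_{\mathrm{FM}}+\lambda\Lorth$ need not converge to the zero set of $\Lorth$. The problem is that Steps~1--2 do not deliver $\Lorth=0$.

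Step~1 treats $\mathbf O=\mathbf O_{\mathrm{sa}}+\mathbf O_{\mathrm{ca}}$ as parallel branches, so that $g$ reaches only $\mathrm{range}(W_{\mathrm{sa}})$. In the interleaved block the paper describes, the layers are sequential. The cross-attention query $\mathbf q_i$ in Eq.~\eqref{eq:sem} is computed from the residual stream \emph{after} self-attention. A camera perturbation therefore moves $\mathbf q_i$, and with it $\alpha_{ij}$ and $\beta_{ik}$. As a result $\Jg$ acquires the term $W_{\mathrm{ca}}\big(\sum_j\partial_g\alpha_{ij}\,\mathbf v_j+\sum_k\partial_g\beta_{ik}\,\mathbf v^{\tau}_k\big)$. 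This lies in the same subspace $W_{\mathrm{ca}}\,\mathrm{span}\{\mathbf v_j,\mathbf v^{\tau}_k\}$ that you assigned to the key-dependence part of $\Jt$, because both controls re-weight the same cross-attention softmax. So $\mathrm{range}(W_{\mathrm{sa}})\perp\mathrm{range}(W_{\mathrm{ca}})$ leaves $\Jg^{\top}\Jt$ generically nonzero. If $\mathbf O$ is read after the MLP or a later block, the two ranges mix completely.

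This is exactly the ``linked through the residual stream'' coupling the paper mentions, and your construction assumes it away. Proposition~\ref{prop:sem} does not rescue it. It says only that $\partial\mathbf o_i/\partial\mathbf v^{\tau}_k=\beta_{ik}$ contains no explicit $\Omega$, not that $\beta_{ik}$ is independent of $g$.

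Two further problems compound this.
\begin{itemize}
\item \emph{The witness is outside the model class.} Wan2.1 is frozen apart from $\phi_g,\phi_s$, the gate and LoRA. Its output projections are generically full-rank $D\times D$, so $D\ge\mathrm{rank}\,W_{\mathrm{sa}}+\mathrm{rank}\,W_{\mathrm{ca}}$ fails. Low-rank LoRA updates cannot make the two ranges orthogonal without destroying the pretrained maps.
\item \emph{The proposed repairs do not close it.} Your reserved-subspace idea fails because the camera response is $W_{\mathrm{sa}}$ applied to input-dependent re-mixings of pretrained values, which cannot be confined to a fixed small subspace. Your fallback (``$\Lorth$ vanishes on a nonempty feasible set'') presupposes the very nonemptiness Step~2 was meant to establish. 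Also, the cross-talk bound scales like $\sqrt{\Lorth}/\sigma_{\min}(\Jg)$, not proportionally to $\Lorth$, and $\sigma_{\min}$ itself moves during training.
\end{itemize}

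To repair the argument you must choose one of two honest routes:
\begin{itemize}
\item[(a)] Redefine $\Jg,\Jt$ as direct, first-hop Jacobians of each sub-operator with its residual input held fixed. Your orthogonal-range construction then works, given the rank condition and unfrozen projections. But Theorem~\ref{thm:ct} would then bound only first-hop leakage, not total cross-talk.
\item[(b)] State attainability of $\Lorth=0$ for the total response as an explicit assumption.
\end{itemize}
The same hole undermines the paper's own ``rotation versus translation'' justification. Complementarity at the parameter level does not imply orthogonal responses in output space, since both channels act by perturbing the same attention outputs.
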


\noindent We train under $\mathcal L=\mathcal L_{\mathrm{FM}}+\lambda\,\Lorth$,
with Wan2.1 frozen, learning only $\phi_g$, $\phi_s$, the gate, and attention
LoRA.

\begin{table}[t]
\centering
\setlength{\tabcolsep}{4pt}\renewcommand{\arraystretch}{1.18}
\resizebox{\columnwidth}{!}{%
\begin{tabular}{lcccc}
\rowcolor{HdrBlue}
\hd{Variant} & \hd{CTE$_{c\to s}$}\down & \hd{CTE$_{s\to c}$}\down &
\hd{RotErr}\down & \hd{ObjMC}\down\\
\midrule
shared channel (both via KV)      & 24.1 & 1.45 & 1.61 & 30.5\\
\rowcolor{RowAlt}
both via RoPE phase                & 18.7 & 1.22 & 1.28 & 34.0\\
Ours w/o $\Lorth$                  & 11.3 & 0.74 & 1.09 & 22.6\\
\rowcolor{OursTint}
\ding{72}~\B{OrthoMotion (full)}  & \best{4.6} & \best{0.29} & \best{1.02} & \best{19.8}\\
\bottomrule
\end{tabular}}
\caption{\label{tab:abla}\textbf{Decoupling ablation.} CTE$_{c\to s}$: subject
drift (px) when sweeping the camera; CTE$_{s\to c}$: camera drift
($^{\circ}$) when sweeping the subject. Orthogonal routing \emph{and} $\Lorth$
are both needed to suppress cross-talk (Thm.~\ref{thm:ct}).}
\end{table}

\section{Experiments}\label{sec:exp}
\paragraph{Setup and metrics.}
We deploy on Wan2.1-1.3B with all baselines re-implemented on the same backbone
for a controlled comparison. RotErr/TransErr are rotation/translation errors of
the camera recovered from the output by SfM (similarity-aligned); ObjMC is the
$\ell_2$ distance between target and realized object trajectories, following
MotionCtrl/DragAnything~\cite{Wang2024MotionCtrl,yin2023dragnuwa}; FVD and
CLIP-SIM are standard~\cite{unterthiner2018towards,radford2021learning}. We
further define the \textbf{Cross-Talk Error}: CTE$_{c\to s}$ is the subject drift
(px) induced by sweeping the camera with the subject command fixed, and
CTE$_{s\to c}$ the camera drift ($^{\circ}$) induced by sweeping the subject with
the camera fixed---direct, operational measures of the leakage that
Theorem~\ref{thm:nonid} predicts and Theorem~\ref{thm:ct} bounds.

\paragraph{Results echo every claim.}
OrthoMotion is the only method strong on \emph{both} axes
(Table~\ref{tab:main}), leading on camera and subject accuracy while
\emph{improving} FVD and CLIP-SIM. Ablations (Table~\ref{tab:abla}) credit the
orthogonality predicted by Corollary~\ref{cor:guar}: full GSO collapses
CTE$_{c\to s}$ by $>\!2.4\times$ ($11.3\!\to\!4.6$\,px) over the
$\Lorth$-free variant with single-axis accuracy preserved, and subject error
stays flat as camera magnitude grows (Fig.~\ref{fig:quant}), the visual
signature of the entanglement we formalized. Table~\ref{tab:isolate} confirms
the decoupling costs nothing: under single-axis control OrthoMotion already
matches or beats camera- and subject-specialists, and its \emph{joint} numbers
(Table~\ref{tab:main}) barely differ from these isolated ones---there is no
joint-control penalty. Table~\ref{tab:lambda} traces $\lambda$: cross-talk drops
sharply then plateaus while over-regularization eventually erodes single-axis
accuracy and FVD, locating the optimum at $\lambda\!=\!0.1$ exactly as the bound
in Theorem~\ref{thm:ct} (a trade-off against expressivity) suggests.
Table~\ref{tab:backbone} shows the same $\sim\!5\times$ cross-talk reduction
across Wan2.1-1.3B, Wan2.1-14B and CogVideoX-2B, evidencing
generator-agnosticism. Finally Table~\ref{tab:design} isolates the
norm-preservation claim of Theorem~\ref{thm:prior}: our $SO(d)$ phase yields the
best fidelity (FVD) and the lowest cross-talk, beating additive-Pl\"ucker and
non-norm-preserving projective-RoPE injections.

\begin{table}[t]
\centering
\setlength{\tabcolsep}{4pt}\renewcommand{\arraystretch}{1.18}
\resizebox{\columnwidth}{!}{%
\begin{tabular}{l l ccc}
\rowcolor{HdrBlue}
\hd{Axis} & \hd{Method} & \hd{RotErr/ObjMC}\down & \hd{TransErr}\down & \hd{FVD}\down\\
\midrule
\multirow{3}{*}{\rotatebox{90}{\scriptsize Cam.}}
 & MotionCtrl~\cite{Wang2024MotionCtrl} & 1.90 & 0.73 & 196\\
\rowcolor{RowAlt}\cellcolor{white}
 & CameraCtrl~\cite{He2025CameraCtrl}   & 1.12 & 0.49 & 152\\
\rowcolor{OursTint}\cellcolor{white}
 & \ding{72}~\B{OrthoMotion}            & \best{1.01} & \best{0.42} & \best{140}\\
\midrule
\multirow{3}{*}{\rotatebox{90}{\scriptsize Subj.}}
 & DragNUWA~\cite{yin2023dragnuwa}      & 30.1 & -- & 179\\
\rowcolor{RowAlt}\cellcolor{white}
 & Tora~\cite{Zhang2024Tora}            & 24.8 & -- & 164\\
\rowcolor{OursTint}\cellcolor{white}
 & \ding{72}~\B{OrthoMotion}            & \best{19.6} & -- & \best{139}\\
\bottomrule
\end{tabular}}
\caption{\label{tab:isolate}\textbf{Single-axis isolation.} With only one axis
active, OrthoMotion beats specialists; its joint figures
(Table~\ref{tab:main}: $1.02/19.8$) match these isolated ones
($1.01/19.6$)---no joint-control penalty.}
\end{table}

\begin{table}[t]
\centering
\setlength{\tabcolsep}{4pt}\renewcommand{\arraystretch}{1.18}
\resizebox{\columnwidth}{!}{%
\begin{tabular}{c ccccc}
\rowcolor{HdrBlue}
\hd{$\lambda$} & \hd{CTE$_{c\to s}$}\down & \hd{CTE$_{s\to c}$}\down &
\hd{RotErr}\down & \hd{ObjMC}\down & \hd{FVD}\down\\
\midrule
0.0   & 11.3 & 0.74 & 1.09 & 22.6 & 150\\
\rowcolor{RowAlt}
0.05  & 7.1  & 0.46 & 1.05 & 21.0 & 146\\
\rowcolor{OursTint}
\ding{72}~0.1 & \best{4.6} & \best{0.29} & \best{1.02} & \best{19.8} & \best{142}\\
0.5   & 4.2  & 0.27 & 1.07 & 20.9 & 147\\
\rowcolor{RowAlt}
1.0   & 4.0  & 0.26 & 1.14 & 22.8 & 153\\
2.0   & 3.9  & 0.25 & 1.27 & 25.6 & 161\\
\bottomrule
\end{tabular}}
\caption{\label{tab:lambda}\textbf{Decoupling weight $\lambda$.} Cross-talk
falls then plateaus; over-regularization ($\lambda\!\ge\!1$) erodes single-axis
accuracy and FVD. Optimum at $\lambda\!=\!0.1$.}
\end{table}

\begin{table}[t]
\centering
\setlength{\tabcolsep}{4pt}\renewcommand{\arraystretch}{1.18}
\resizebox{\columnwidth}{!}{%
\begin{tabular}{l c >{\columncolor{OursTint}}c >{\columncolor{OursTint}}c >{\columncolor{OursTint}}c}
\rowcolor{HdrBlue}
\hd{Backbone} & \hd{CTE$_{c\to s}^{\text{base}}$}\down &
\hd{CTE$_{c\to s}^{\text{Ours}}$}\down & \hd{ObjMC}\down & \hd{FVD}\down\\
\midrule
Wan2.1-1.3B   & 24.1 & \best{4.6} & 19.8 & 142\\
\rowcolor{RowAlt}\,Wan2.1-14B    & 22.8 & \best{4.1} & 17.9 & 121\\
CogVideoX-2B  & 25.3 & \best{5.2} & 21.4 & 151\\
\bottomrule
\end{tabular}}
\caption{\label{tab:backbone}\textbf{Generator-agnostic.} GSO yields a
consistent $\sim\!5\times$ cross-talk reduction (shaded = ours) across
Pl\"ucker- and trajectory-conditioned backbones.}
\end{table}

\begin{table}[t]
\centering
\setlength{\tabcolsep}{4pt}\renewcommand{\arraystretch}{1.18}
\resizebox{\columnwidth}{!}{%
\begin{tabular}{l c ccc}
\rowcolor{HdrBlue}
\hd{Geometric injection} & \hd{norm-pres.} & \hd{RotErr}\down &
\hd{CTE$_{c\to s}$}\down & \hd{FVD}\down\\
\midrule
additive Pl\"ucker~\cite{He2025CameraCtrl}        & \ding{55} & 1.18 & 9.8 & 160\\
\rowcolor{RowAlt}
projective RoPE~\cite{Li2026ReRoPE,li2026cameras}  & \ding{55} & 1.09 & 7.4 & 154\\
\rowcolor{OursTint}
\ding{72}~\B{$SO(d)$ phase (Ours)}                 & \ding{51} & \best{1.02} & \best{4.6} & \best{142}\\
\bottomrule
\end{tabular}}
\caption{\label{tab:design}\textbf{Norm preservation matters
(Thm.~\ref{thm:prior}).} The $SO(d)$ phase preserves the frozen model's
attention statistics, giving the best FVD and the lowest cross-talk.}
\vspace{-0.3cm}
\end{table}

\section{Conclusion}
We showed that camera--subject entanglement in controllable video generation is
a representational ambiguity---both motions share the $1/Z$ scaling of
Eq.~\eqref{eq:flow}, making the 2D split non-identifiable
(Thm.~\ref{thm:nonid})---and resolved it inside the attention operator.
OrthoMotion routes the camera into a norm-preserving $SO(d)$ phase
(Thm.~\ref{thm:prior}) and the subject into a gated value injection, two
complementary affine sub-operators whose response subspaces a decoupling
regularizer provably orthogonalizes (Thm.~\ref{thm:ct}, Cor.~\ref{cor:guar}).
The result is the first controller to \emph{guarantee} disentanglement, reaching
state-of-the-art accuracy on both axes at once, cutting cross-talk by
$>\!2.4\times$ at no fidelity cost, and generalizing across backbones.
Limitations include reliance on first-order Jacobian estimates for $\Lorth$ and
the rigid-scene assumption behind Eq.~\eqref{eq:flow}; extending GSO to
deformable subjects and multi-object scenes is future work.

\bibliographystyle{ieeetr}
\bibliography{egbibsample}
\end{document}